\documentclass{article} % For LaTeX2e
\usepackage[preprint]{colm2025_conference}

\usepackage{microtype}
\usepackage{hyperref}
\usepackage{url}
\usepackage{booktabs}
\usepackage{graphicx}
\usepackage{lineno}
\usepackage{amsmath}
\usepackage{amssymb} % Added for \mathbb, \mathcal
\usepackage{amsthm}
\definecolor{darkblue}{rgb}{0, 0, 0.5}
\hypersetup{colorlinks=true, citecolor=darkblue, linkcolor=darkblue, urlcolor=darkblue}
\usepackage{listings}
\usepackage{float} 

\newtheorem{definition}{Definition}

\newtheorem{lemma}{Lemma}
\newtheorem{proposition}{Proposition}

\makeatletter
\renewenvironment{proof}[1][\proofname]{%
  \par\pushQED{\qed}%
  \normalfont                     % body upright
  \topsep6\p@\@plus6\p@\relax
  \trivlist
  \item[\hskip\labelsep\itshape #1.]% italic heading only
}{%
  \popQED\endtrivlist\@endpefalse
}
\makeatother

\title{Top-b: Entropic Regulation of Relative Probability Bands in Autoregressive Language Processes}

\author{Deepon Halder \\
AI4Bharat\\
IIEST Shibpur \\
\texttt{deeponh.2004@gmail.com}
\AND
Raj Dabre \\
IIT Madras \\
AI4Bharat
}

\begin{document}

\ifcolmsubmission
\linenumbers
\fi

\maketitle

\begin{abstract}
Probabilistic language generators are theoretically modeled as discrete stochastic processes, yet standard decoding strategies (Top-$k$, Top-$p$) impose static truncation rules that fail to accommodate the dynamic information density of natural language. This misalignment often forces a suboptimal trade-off: static bounds are either too restrictive for high-entropy creative generation or too permissive for low-entropy logical reasoning. In this work, we formalize the generation process as a trajectory through a relative probability manifold. We introduce \textbf{Top-b} (Adaptive Relative Band Sampling), a decoding strategy that regulates the candidate set via a dynamic bandwidth coefficient coupled strictly to the instantaneous Shannon entropy of the model's distribution. We provide a theoretical framework demonstrating that Top-b acts as a variance-minimizing operator on the tail distribution. Empirical validation on GPQA and GSM8K benchmarks indicates that Top-b significantly reduces generation entropy and inter-decoding variance while maintaining competitive reasoning accuracy, effectively approximating a self-regulating control system for autoregressive generation. \\
\end{abstract}

\section{Introduction}

We formulate the problem of autoregressive text generation as the realization of a discrete stochastic process $Y = \{Y_t\}_{t=1}^{\infty}$. In this framework, a stochastic process is simply a sequence of random variables where the outcome of the current step depends on the history of previous steps. Modern Large Language Models (LLMs) function as estimators of the conditional probability distribution $P(Y_t \mid Y_{<t})$, assigning a probability mass to every possible token in a vocabulary $\mathcal{V}$.

The central challenge in decoding is transforming this continuous probability distribution into a single discrete choice $y_t$. This requires a decision rule, or \textit{decoding strategy}, that balances two competing objectives: \textit{fidelity} (adhering to the model's high-confidence predictions) and \textit{diversity} (allowing for variability in expression).

Standard stochastic strategies, such as Top-$k$ \citep{fan2018hierarchical} and Top-$p$ (Nucleus) \citep{holtzman2020curious}, employ \textbf{static truncation}. They restrict the sampling pool (the set of valid next tokens) using fixed hyperparameters, either a fixed count $k$ or a fixed cumulative probability mass $p$.
\begin{figure}[t] % [t] places it at the top of the page, which looks best for large 2x2 grids
    \centering
    % You can use .pdf for vector crispness if you compile with pdflatex/xelatex
    \includegraphics[width=\linewidth]{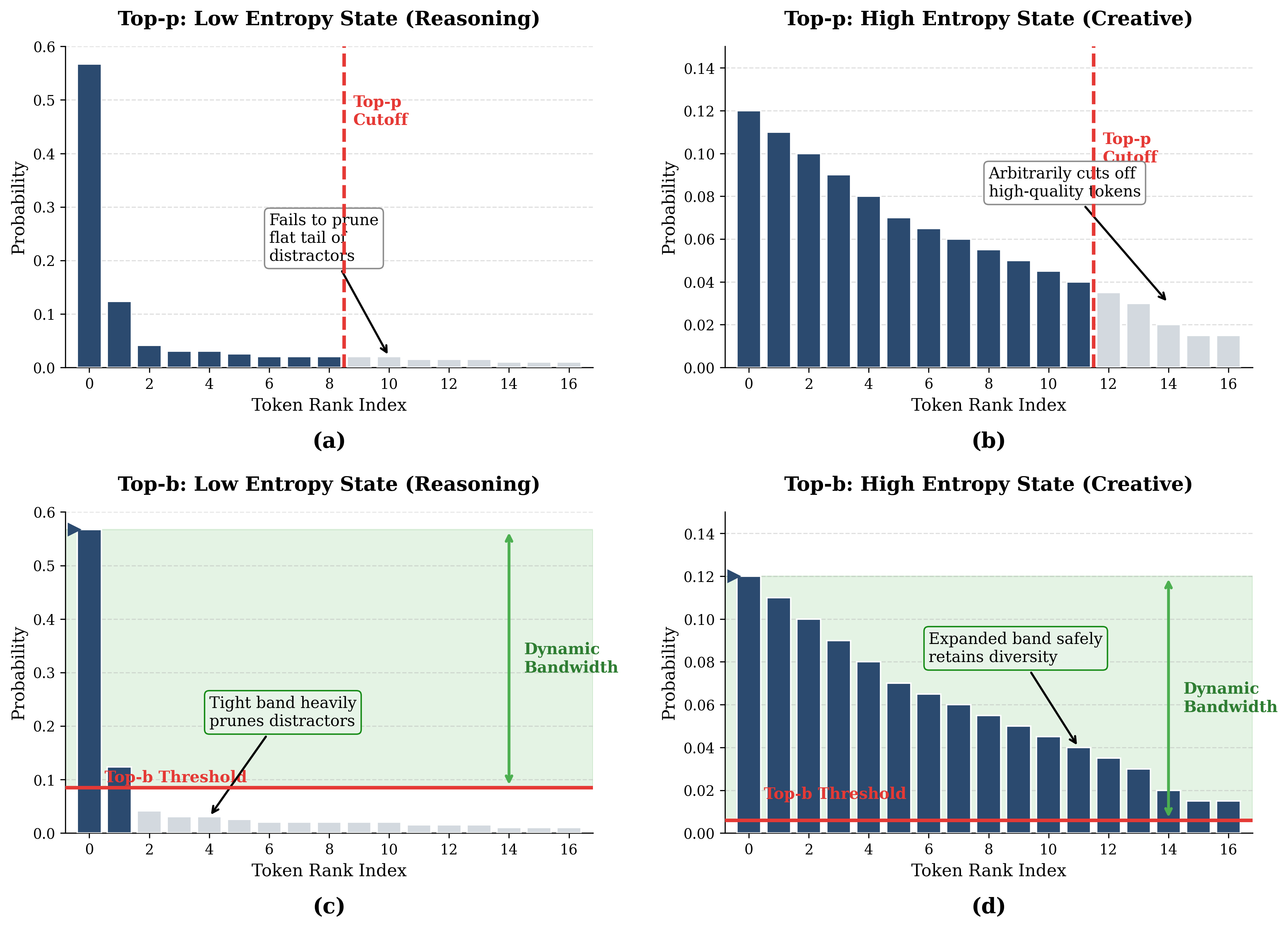} 
    \caption{\textbf{Structural comparison of static cumulative truncation (Top-$p$) versus entropy-regulated relative thresholding (Top-$b$).} \textbf{(a)} In low-entropy reasoning regimes, Top-$p$ admits a long tail of low-probability distractor tokens, increasing the risk of logical incoherence. \textbf{(b)} In high-entropy creative regimes, Top-$p$'s static cumulative threshold arbitrarily truncates viable tokens, artificially restricting diversity. In contrast, Top-$b$ establishes a dynamic probability band anchored to the distribution mode ($p_{\max}$). \textbf{(c)} Under low entropy, the Top-$b$ bandwidth strictly contracts to prune the distractor tail and enforce deterministic reasoning. \textbf{(d)} Under high entropy, the bandwidth expands to safely retain linguistic diversity. This illustrates how Top-$b$ continuously adapts its sampling support to the local information density of the language process.}
    \label{fig:top_b_innovation}
\end{figure}
This static approach is theoretically suboptimal because the \textit{uncertainty} of the language process is non-stationary; it fluctuates wildly over time.
\begin{itemize}
    \item In \textbf{low-entropy states} (e.g., solving an arithmetic equation), the distribution is peaked, meaning most probability mass is concentrated on one or two tokens. Here, a wide static threshold (like $p=0.9$) is too permissive, allowing the inclusion of \textit{tail events}, low-probability outcomes that, in this context, manifest as logical errors or hallucinations.
    \item In \textbf{high-entropy states} (e.g., beginning a creative story), the distribution is flat or uniform. Here, a static threshold might be too restrictive, artificially truncating valid creative options.
\end{itemize}

The primary contributions of this work are as follows:

\begin{itemize}
\item \textbf{Entropy-regulated decoding.} We introduce \textbf{Top-b}, a decoding strategy that defines the sampling support through a relative probability band anchored to the distribution mode and dynamically scaled by the normalized Shannon entropy.

\item \textbf{Variance-minimizing interpretation.} We analyze the structural behavior of Top-b and argue that entropy-scaled relative thresholding acts as a variance-minimizing operator on the tail distribution, stabilizing the generation trajectory in low-entropy reasoning regimes.

\item \textbf{Empirical validation on reasoning benchmarks.} Experiments on GPQA and GSM8K demonstrate that Top-b achieves competitive or improved accuracy while consistently producing lower generation entropy and significantly reduced variance across decoding runs compared with Top-$k$, Top-$p$, Min-$p$, and entropy-aware baselines.
\end{itemize}
\section{Related Work}

The landscape of autoregressive decoding has shifted from fixed heuristics toward adaptive, information-theoretic constraints. We categorize these developments into static truncation, entropy-driven sampling, and relative thresholding.

\subsection{Static and Distribution-Agnostic Truncation}
Standard decoding strategies such as Top-$k$ \citep{fan2018hierarchical} and Top-$p$ (Nucleus) sampling \citep{holtzman2020curious} rely on static hyperparameters to prune the long tail of the distribution. While Top-$k$ limits the support size $|\mathcal{S}|$ to a constant, Top-$p$ allows the support size to vary such that the cumulative mass $\sum_{y \in \mathcal{S}} P(y)$ exceeds a threshold $p$. However, both methods are distribution-agnostic in their "strictness"; a static $p=0.9$ may be too restrictive for high-entropy creative tasks and too permissive for low-entropy logical reasoning, where the remaining $0.1$ mass may consist of thousands of "distractor" tokens in large vocabularies.

\subsection{Entropy-Aware and Information-Theoretic Sampling}
Recent work has sought to couple the truncation process to the model's internal uncertainty. \textbf{Typical Sampling} \citep{meister2023locally} selects tokens whose information content is close to the conditional Shannon entropy, grounded in the property of the Typical Set. \textbf{Mirostat} \citep{basu2021mirostat} utilizes a feedback loop to maintain a constant target cross-entropy, effectively treating decoding as a control system problem.

More recently, \textbf{Eta Sampling} \citep{hewitt2022truncation} and \textbf{Epsilon Sampling} introduce simple probability bounds, but often fail to account for the relative distance between the mode and the tail. \textbf{REAL} (Residual Entropy-Aware Length) \citep{zhao2024real} and \textbf{p-less} sampling \citep{liu2024pless} utilize Rényi entropy and collision entropy to achieve hyperparameter-free truncation. Top-b extends this lineage by utilizing the normalized Shannon entropy to modulate the \textit{relative} width of the probability band, rather than just the absolute probability mass.

\subsection{Relative Thresholding and Min-p}
The most direct predecessor to our work is \textbf{Min-p} sampling \citep{grimstad2024minp}, which introduces a threshold $\tau = \alpha \cdot p_{\max}$. Min-p ensures that a token is only sampled if its likelihood is within a fixed fraction of the most likely token. While Min-p is more robust than Top-p to changes in vocabulary size, its scaling factor $\alpha$ remains a static hyperparameter. 

Top-b generalizes Min-p by formalizing $\alpha$ as a dynamic function of the local state manifold. Specifically, while Min-p assumes a constant tolerance for noise, Top-b posits that the "bandwidth of viability" should expand or contract based on the instantaneous unpredictability ($\eta_t$) of the stochastic process. This bridges the gap between the precision of greedy search ($H \to 0$) and the exploratory nature of random walks ($H \to H_{\max}$).

\subsection{Theoretical Stability in Reasoning}
The pursuit of stability in Chain-of-Thought (CoT) reasoning has led to strategies like \textbf{Self-Consistency} \citep{wang2023self} and \textbf{Best-of-N} sampling. These methods mitigate variance post-generation. In contrast, Top-b acts as an \textit{online} variance-minimizing operator. By pruning the tail distribution proportional to the model's confidence, Top-b minimizes the "branching factor" of errors during autoregressive generation, preventing the accumulation of drift in long-context reasoning tasks.

\section{Formalism of Language Processes}

We analyze the generation process through the lens of information theory.

\subsection{The Discrete Language Process}
Let $\mathcal{V}$ be a finite vocabulary of size $|\mathcal{V}|$. We define the language process as a sequence of random variables $Y_1, Y_2, \dots$, where each $Y_t$ takes a value $y \in \mathcal{V}$. The behavior of the system at time $t$ is fully described by the conditional probability vector $\mathbf{p}_t$:
\begin{equation}
    \mathbf{p}_t(y) = P(Y_t = y \mid Y_{<t} = y_{<t})
\end{equation}
This vector represents the model's estimate of the likelihood of every possible next word, given the history $y_{<t}$.

\subsection{Quantifying Uncertainty and The Mode}
To govern the sampling process, we must quantify the model's confusion at any given step. We utilize \textbf{Shannon Entropy} ($H$), a measure of the unpredictability of the state:
\begin{equation}
    H(\mathbf{p}_t) = - \sum_{y \in \mathcal{V}} \mathbf{p}_t(y) \log \mathbf{p}_t(y)
\end{equation}
A low $H(\mathbf{p}_t)$ implies the model is confident (the distribution is sharp/peaked). A high $H(\mathbf{p}_t)$ implies the model is uncertain (the distribution is flat/uniform).

We also identify the \textbf{Mode} of the distribution, $m_t$, which is simply the single most likely token:
\begin{equation}
    m_t = \arg\max_{y \in \mathcal{V}} \mathbf{p}_t(y), \quad \text{with probability } p_{\max} = \mathbf{p}_t(m_t)
\end{equation}
The mode serves as our anchor point. In our framework, the validity of any other token $y$ is determined by comparing its probability $\mathbf{p}_t(y)$ against the anchor $p_{\max}$.

\section{Locally Adaptive Relative Sampling (Top-b)}

We introduce the concept of an \textbf{Entropy-Regulated Probability Band}. The fundamental hypothesis is that the tolerance for deviating from the mode should be inversely proportional to the model's confidence.

\subsection{Defining the Dynamic Support}

In probability theory, the \textit{support} of a distribution is the set of outcomes with non-zero probability. In decoding, we define a \textit{truncated support} $\mathcal{S}_{b,t}$, which is the subset of the vocabulary we actually consider for sampling.

Let $p_i$ denote the probability assigned to the $i$-th token and let
\[
p_{\max} = \max_i p_i .
\]
We define a dynamic threshold
\[
t = p_{\max} \cdot (1 - \text{bandwidth}_t).
\]
Only tokens satisfying $p_i \ge t$ are considered during sampling.

\begin{definition}[Top-b Support Set]
Let $\text{bandwidth}_t \in (0, 1]$ be the adaptive bandwidth coefficient. The Top-b support set $\mathcal{S}_{b,t}$ consists of all tokens $y$ satisfying:
\begin{equation}
    \mathcal{S}_{b,t} = \{ y \in \mathcal{V} \mid \mathbf{p}_t(y) \ge (1 - \text{bandwidth}_t) \cdot p_{\max} \}.
\end{equation}
\end{definition}

Here, $(1 - \text{bandwidth}_t)$ acts as a relative threshold. If the bandwidth is small, the threshold is close to $p_{\max}$ and only tokens nearly as probable as the best one are retained. If the bandwidth increases, the threshold decreases, allowing more tokens into the support.

\subsection{Adaptive Bandwidth via Entropy}

Let the probability distribution over tokens be $p = \{p_i\}$ and let the Shannon entropy be
\[
H(p) = - \sum_i p_i \log p_i .
\]
Let
\[
H_{\max} = \log(|\mathcal{V}|),
\]
the maximum possible entropy (achieved under a uniform distribution).

We define the adaptive bandwidth as
\begin{equation}
    \text{bandwidth}_t 
    = \text{base\_bandwidth} 
    \times \left( 1 + \frac{H(p)}{H_{\max}} \right),
\end{equation}
where $\text{base\_bandwidth} \in (0,1)$ is a fixed hyperparameter.

This implies:
\begin{enumerate}
    \item When $H \to 0$,
    \[
    \text{bandwidth}_t \to \text{base\_bandwidth}.
    \]
    \item When $H \to H_{\max}$,
    \[
    \text{bandwidth}_t \to 2 \cdot \text{base\_bandwidth}.
    \]
\end{enumerate}

Thus the bandwidth scales linearly with normalized entropy, increasing the admissible support as uncertainty grows.

\subsection{Theoretical Behavior}

This formulation ensures the decoding strategy adapts smoothly to the entropy of the probability landscape:

\begin{proposition}[Entropy-Scaled Constraints]
The Top-b mechanism exhibits the following asymptotic behaviors:
\begin{enumerate}
    \item \textbf{Low-Entropy Regime ($H \to 0$):}  
    The bandwidth approaches $\text{base\_bandwidth}$.  
    The threshold becomes
    \[
    t \approx p_{\max}(1 - \text{base\_bandwidth}),
    \]
    restricting sampling to high-probability tokens near the mode.  
    This behavior approximates greedy decoding when the distribution is sharp.

    \item \textbf{High-Entropy Regime ($H \to H_{\max}$):}  
    The bandwidth increases to $2 \cdot \text{base\_bandwidth}$.  
    The threshold decreases accordingly:
    \[
    t \approx p_{\max}(1 - 2 \cdot \text{base\_bandwidth}).
    \]
    The support expands, encouraging diversity as the distribution becomes flatter.
\end{enumerate}
\end{proposition}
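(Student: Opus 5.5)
The plan is to argue directly from the definition of the adaptive bandwidth together with the Top-b Support Set. Write $\beta = \text{base\_bandwidth}$ and let $\eta = H(p)/H_{\max} \in [0,1]$ denote the normalized entropy. Then $\text{bandwidth}_t = \beta(1+\eta)$ and the threshold is $t(\eta) = p_{\max}\bigl(1-\beta(1+\eta)\bigr)$. Both quantities are affine in $\eta$, hence continuous, so the two limiting statements come from substituting $\eta \to 0$ and $\eta \to 1$. I will also check that the bandwidth stays in $(0,1]$, as the definition of $\mathcal{S}_{b,t}$ requires. This needs $\beta \le 1/2$ in the high-entropy case, and I would state that as a standing assumption; otherwise the threshold is nonpositive and the support is all of $\mathcal{V}$, which only reinforces the expansion claim.

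For the low-entropy regime, I first use that $H(p) \to 0$ forces $p$ toward a point mass. Concretely, $H(p) \ge -\log p_{\max}$, because $H$ is at least the min-entropy, so $p_{\max} \ge e^{-H} \to 1$. The threshold then tends to $1-\beta$. Every non-mode token satisfies $p_i \le 1 - p_{\max} \to 0$, so for $H$ small enough we have $1-p_{\max} < p_{\max}(1-\beta(1+\eta))$. At that point $\mathcal{S}_{b,t} = \{m_t\}$ and the sampler coincides with greedy decoding. This makes "approximates greedy decoding" precise: there is an explicit entropy level $H < -\log\bigl(1/(2-\beta')\bigr)$, with $\beta' = 2\beta$ as a crude bound, below which the support is a singleton.

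For the high-entropy regime, $\eta \to 1$ gives bandwidth $\to 2\beta$ and $t \to p_{\max}(1-2\beta)$. To justify "the support expands," I will show monotonicity: for fixed $p_{\max}$ the threshold is strictly decreasing in $\eta$, so $\mathcal{S}_{b,t}$ is nondecreasing under inclusion. Moreover, $H \to H_{\max}$ forces $p \to$ uniform, since uniform is the unique maximizer and $H$ is continuous on the compact simplex. In that limit all $p_i \to 1/|\mathcal{V}| = p_{\max}$, so every token exceeds $p_{\max}(1-2\beta)$ and $\mathcal{S}_{b,t} = \mathcal{V}$ in the limit.

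The main obstacle is the last step. Because $p_i \ge (1-2\beta)p_{\max}$ is a non-strict inequality, ties near the uniform limit have to be handled through a convergence argument rather than just plugging in. I would first invoke compactness of the simplex to get $\max_i |p_i - 1/|\mathcal{V}|| \to 0$ as $H \to H_{\max}$. The positive margin $2\beta\, p_{\max}$ then absorbs the deviation for $H$ close enough to $H_{\max}$.
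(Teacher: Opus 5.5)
Your proposal is correct. Its first step matches what the paper does: the paper gives no separate proof. It only substitutes $H/H_{\max}\to 0$ and $H/H_{\max}\to 1$ into the affine rule $\text{bandwidth}_t=\beta(1+H/H_{\max})$ (the two-item list just before the proposition, with $H_{\max}=\log|\mathcal{V}|$ from the appendix lemma), and it leaves ``approximates greedy decoding'' and ``the support expands'' as informal readings.

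You add two lemmas that turn those readings into statements about $\mathcal{S}_{b,t}$. The min-entropy bound $H\ge -\log p_{\max}$ gives the explicit criterion $H<\log(2-2\beta)$ for $\mathcal{S}_{b,t}=\{m_t\}$. The compactness/unique-maximizer argument gives $\mathcal{S}_{b,t}=\mathcal{V}$ for $H$ near $H_{\max}$. You also correctly note that $\text{base\_bandwidth}\in(0,1)$ is compatible with the requirement $\text{bandwidth}_t\in(0,1]$ only if $\beta\le 1/2$, and your criterion is non-vacuous only for $\beta<1/2$.

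Two minor points. First, your monotonicity step only compares bandwidths for one fixed distribution. Fixing $p_{\max}$ alone does not fix $\mathcal{S}_{b,t}$, and $\eta$ cannot move while $\mathbf{p}_t$ is held fixed, so the expansion claim rests on your near-uniform argument. Second, the ``ties'' obstacle is mild. The threshold never exceeds $(1-\beta)p_{\max}$, so, writing $n=|\mathcal{V}|$, full support already holds once $\max_i|p_i-1/n|\le \beta/(n(2-\beta))$.

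That second observation also shows something your sharper route reveals beyond the paper's. Both regime conclusions hold for any static relative band $(1-c)\,p_{\max}$ with $c\in(0,1)$, i.e.\ for Min-$p$. So the entropy factor $(1+\eta)$ changes only the margins ($\beta p_{\max}$ versus $2\beta p_{\max}$), not the qualitative singleton/full-support behavior.
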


\section{Experiments}

We evaluate the hypothesis that Top-b acts as a stabilizer for the language process on high-reasoning tasks. We utilize \texttt{google/gemma-3-4b-it} as the underlying stochastic process generator.

\subsection{Experimental Settings}
We compare against:
\begin{itemize}
    \item \textbf{Stochastic Baselines:} Top-k ($k=40$), Top-p ($p=0.9$), Min-p ($\alpha=0.05$).
    \item \textbf{Entropic Baselines:} Eta Sampling .
\end{itemize}
Datasets include \textbf{GPQA} (graduate-level reasoning) and \textbf{GSM8K} (arithmetic reasoning).

\section{Results}

\subsection{GPQA Evaluation}

Table \ref{tab:gpqa} details the accuracy and average sequence entropy across decoding strategies. Top-b achieves a competitive accuracy of 0.2285 while maintaining the \textbf{statistically lowest average entropy} (0.1579).

\begin{table}[t]

\caption{GPQA accuracy and average entropy. Top-b minimizes average entropy, indicating a reduction in generation stochasticity and confusion.}

\label{tab:gpqa}

\begin{center}

\begin{tabular}{lcc}

\toprule

\textbf{Method} & \textbf{Accuracy} & \textbf{Avg\_Entropy} \\

\midrule

\textbf{Top-b (Ours)} & 0.2285 & \textbf{0.1579} \\

Eta & 0.2099 & 0.1592 \\

Min-p & 0.2199 & 0.1714 \\

Epsilon & \textbf{0.2389} & 0.1712 \\

Top-p & 0.2132 & 0.1764 \\

Temperature & 0.2121 & 0.1774 \\

Top-k & 0.2333 & 0.1815 \\

\bottomrule

\end{tabular}

\end{center}

\end{table}

\textbf{Implications of Entropy Minimization}
The statistically significant reduction in entropy observed with Top-b is non-trivial. In autoregressive generation, token-level entropy functions as a metric of decisiveness; elevated entropy correlates with diffuse probability mass and an increased likelihood of off-distribution generation. By explicitly minimizing entropy, Top-b compels the model to converge on sharper predictions. Unlike Top-p, which may preserve a broad candidate tail even under marginal uncertainty, Top-b imposes a principled constraint, anchoring generation to high-likelihood regions unless the underlying distribution topology explicitly necessitates expansion. (For concrete, token-level examples demonstrating this aggressive tail-pruning {across various contexts, see Appendix \ref{app:case_studies})}

\textbf{Variance Reduction and Stability.}

Figure \ref{gpqa_acc} illustrates the variance in performance across random seeds. Top-b exhibits the \textbf{lowest variance}, implying higher deterministic stability. This suggests that the method is robust to perturbations in token probabilities, a critical property for reproducible reasoning chains. By dampening fluctuations without sacrificing accuracy, Top-b offers a more reliable alternative to purely stochastic sampling.

\begin{figure}[h]

\centering

\includegraphics[width=0.80\linewidth]{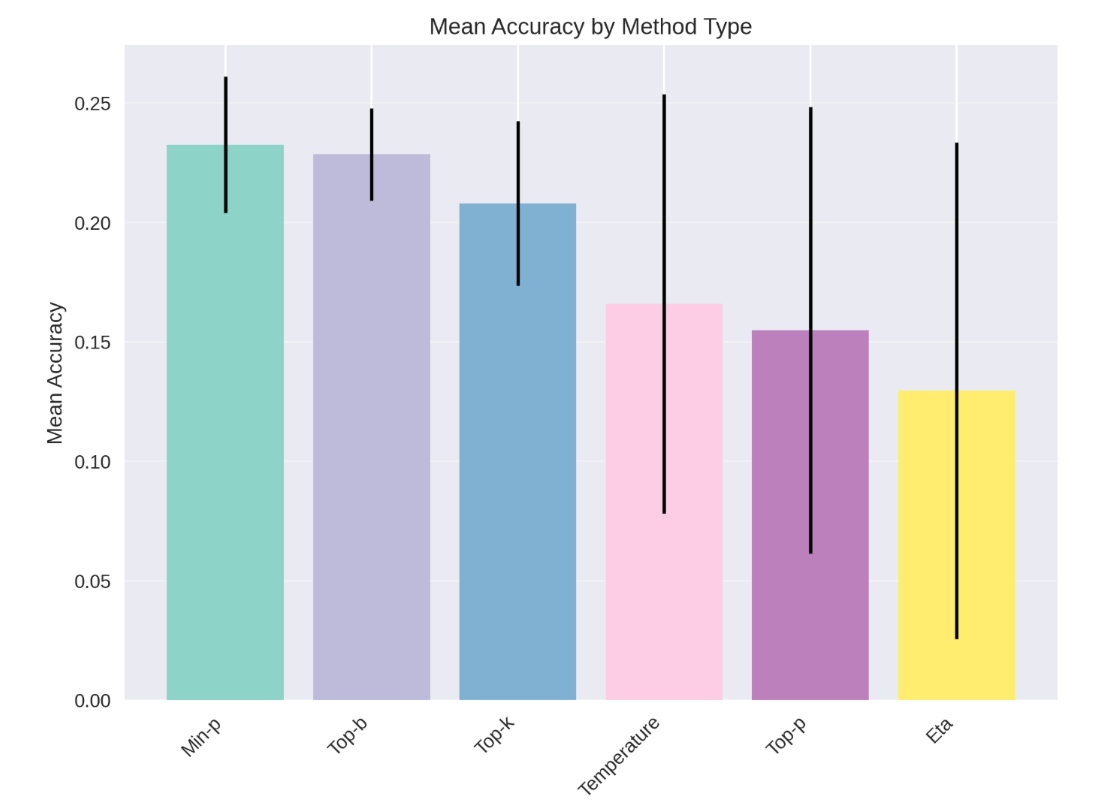}

\caption{Accuracy variance across random seeds on GPQA. Top-b exhibits the lowest variance, indicating higher deterministic stability compared to Top-p and stochastic sampling methods.}

\label{gpqa_acc}

\end{figure}

\textbf{Entropy Dynamics: Top-b vs.\ Top-p.}

We analyzed the entropy trajectory across generated sequences (Figure~\ref{entropyanal}). While initial dynamics are similar, Top-p sustains elevated entropy in the latter sequence segments due to its cumulative probability threshold, which admits wider candidate sets even as confidence should theoretically sharpen. In contrast, Top-b induces a monotonic reduction in entropy, frequently driving it toward zero in the terminal phase of generation. This confirms that Top-b effectively prunes divergent branches, minimizing the accumulation of uncertainty over long-context reasoning.

\begin{figure}[h]

\centering

\includegraphics[width=0.8\linewidth]{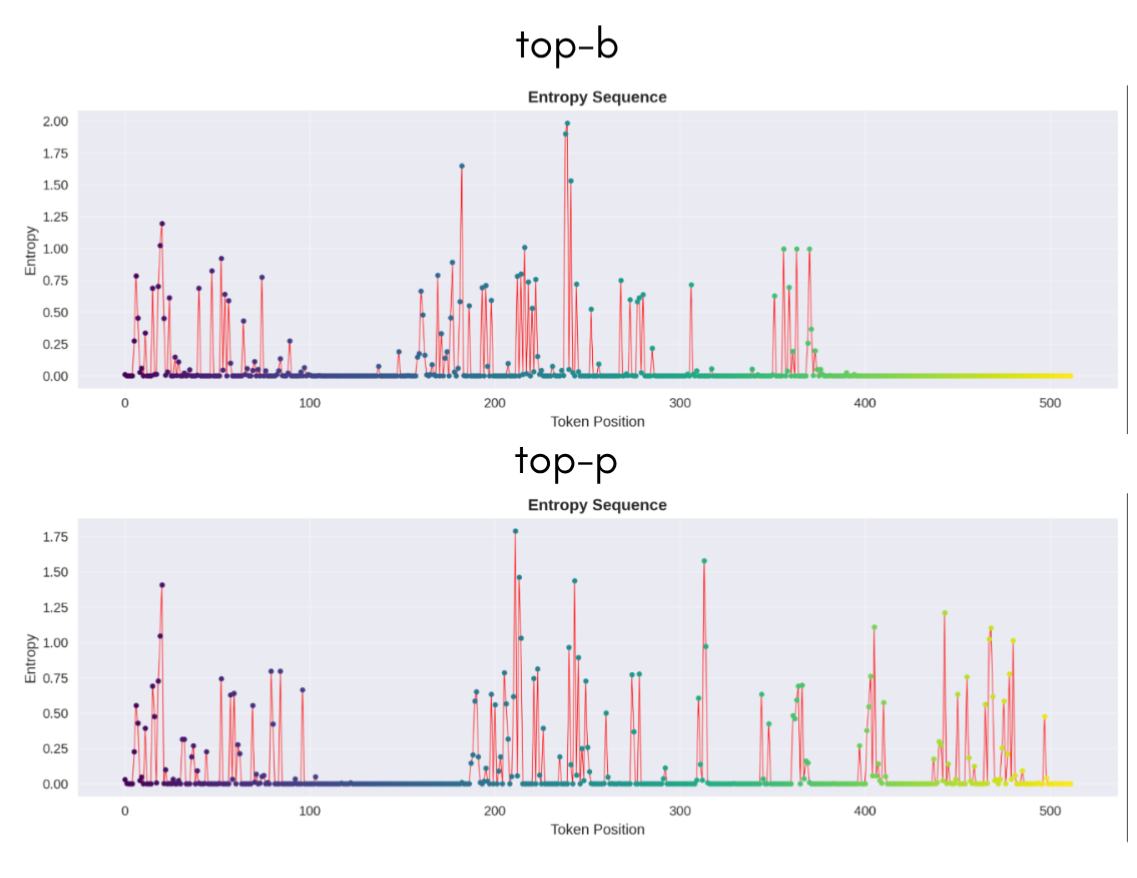}

\caption{Entropy trajectory over generation steps for Top-b and Top-p sampling. Top-b induces a monotonic reduction in entropy, while Top-p maintains higher entropy in later stages due to broader candidate sets.}

\label{entropyanal}

\end{figure}

\subsection*{Interaction with Temperature Scaling}

We investigated the coupling between the bandwidth parameter $b$ and temperature $T$ (Figure~\ref{gpqavar}). The relationship is non-monotonic; mid-range values ($b \approx 0.30$) maximize performance at higher temperatures ($T=2.5$). The results suggest that Top-b acts as a regularizer for high-temperature settings, suppressing the tail noise typically amplified by scaling $T$, provided $b$ is not set so aggressively as to induce mode collapse.

\begin{figure}[h]

\centering

\includegraphics[width=0.80\linewidth]{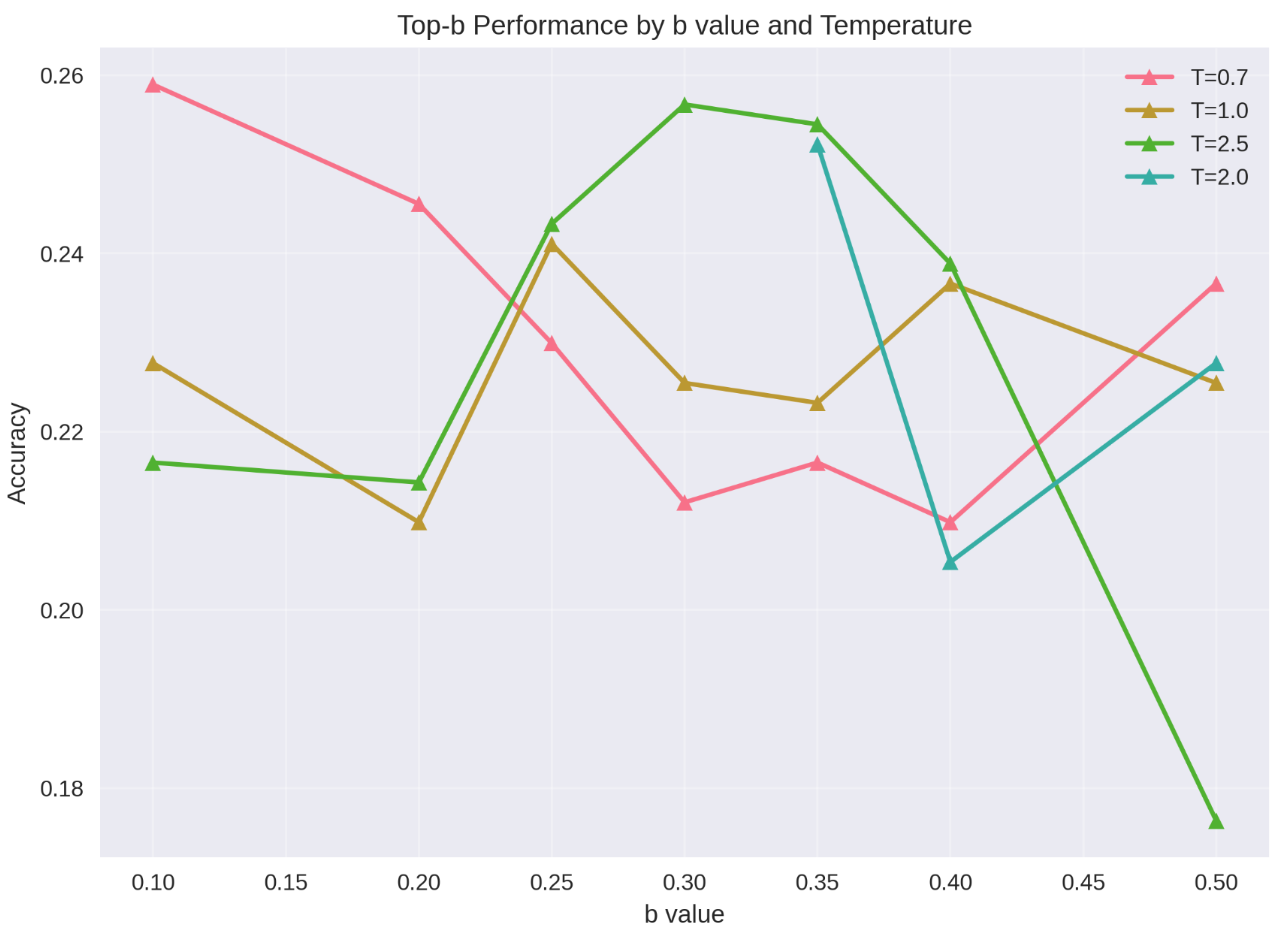}

\caption{Interaction between bandwidth parameter $b$ and temperature $T$ on GPQA performance. Mid-range $b$ values regularize high-temperature sampling, improving accuracy without inducing mode collapse.}

\label{gpqavar}

\end{figure}

\subsection{GSM8K Evaluation}

Preliminary evaluation on GSM8K (1000 samples, CoT split) further validates these findings.

\begin{table}[h]

\centering

\begin{tabular}{lcc}

\hline

\textbf{Method} & \textbf{Average Accuracy} & \textbf{Variance} \\

\hline

\textbf{Top-$b$ (Ours)} & \textbf{0.2680} & $1.136 \times 10^{-4}$ \\

Top-$p$ & 0.2613 & $1.1207 \times 10^{-3}$ \\

Top-$k$ & 0.2093 & $3.797 \times 10^{-4}$ \\

Min-$p$ & 0.2427 & $2.380 \times 10^{-4}$ \\

\hline

\end{tabular}

\caption{Preliminary results on GSM8K demonstrating Top-b's superior accuracy and variance reduction.}

\label{tab:gsm8k_results}

\end{table}

As shown in Table \ref{tab:gsm8k_results}, Top-b achieves the highest accuracy (\textbf{0.2680}) with an order-of-magnitude lower variance ($1.136 \times 10^{-4}$) compared to Top-p. This supports the hypothesis that Top-b effectively filters arithmetic distractors, low-probability tokens that disrupt the logical flow of Chain-of-Thought reasoning.

\section{Discussion and Limitations}

Top-b represents a shift toward \textit{uncertainty-regulated decoding}, where the model's intrinsic confidence governs its generative freedom.

\textbf{The Expressivity-Stability Trade-off.} While our primary focus was logical reasoning, the adaptive formulation of Top-b theoretically extends to creative generation. In creative domains characterized by inherently high entropy (flat distributions), Top-b naturally expands the sampling band, preserving expressivity without requiring manual hyperparameter relaxation.

\textbf{Computational Considerations.} The primary overhead of Top-b is the computation of Shannon entropy at each decoding step. However, as this is a vector operation over $V$ (typically $32k - 128k$), the latency is negligible relative to the $O(N^2)$ complexity of the attention mechanism.

\section{Conclusion}

We have introduced \textbf{Top-b}, an adaptive sampling mechanism that dynamically bounds the search space based on normalized entropy. By coupling the decoding bandwidth to the model's instantaneous uncertainty, Top-b mitigates the risks of static thresholding. Empirical results on GPQA and GSM8K demonstrate that Top-b minimizes sequence entropy and variance while maintaining competitive accuracy. This establishes Top-b not merely as a heuristic, but as a theoretically grounded method for enhancing the reliability of Large Language Model generation.

\section{Acknowledgments}

This research was conducted at AI4Bharat, IIT Madras. We acknowledge the support of the computing resources provided.

\bibliography{colm2025_conference}

\bibliographystyle{colm2025_conference}

\appendix

\section{Future Work}

We intend to extend this evaluation to diverse architectures, including LLaMA \citep{llama3_2024} and Qwen \citep{qwen2_2024}, and conduct scaling laws analysis across model sizes (2B to 405B). Future benchmarks will encompass multilingual tasks and complex mathematical reasoning to further stress-test the entropy-adaptation hypothesis.

\section{Entropy-Induced Branching}

Token-level entropy can be theoretically modeled as a branching factor in the generation tree. High entropy regions represent nodes with multiple viable edges (continuations), whereas low entropy regions approximate linear paths. Top-b effectively prunes these high-entropy branches by enforcing a stricter relative probability band, collapsing the generation tree onto the path of maximum likelihood (Figure \ref{fig:entropy-branching}). This diagnostic perspective suggests that if alternative continuations persist under Top-b, they likely represent deeply memorized or strongly weighted priors within the model's parameter space.

\begin{figure}[h]

\centering

\includegraphics[width=0.60\linewidth]{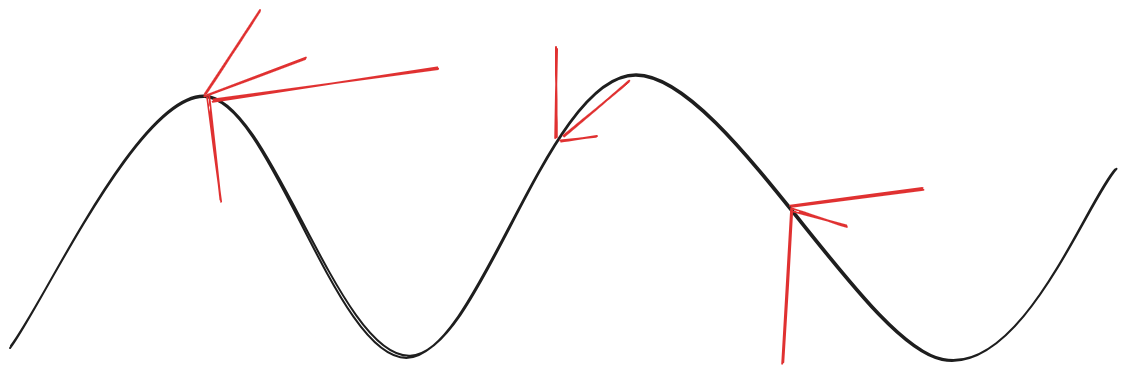}

\caption{Schematic of entropy-induced branching. Top-b acts as a pruning operator, collapsing diffuse branches into a single high-likelihood continuation.}

\label{fig:entropy-branching}

\end{figure}
\section{Proof of Maximum Entropy Bound}
\label{app:proofs}
\begin{lemma}
For a vocabulary $\mathcal{V}$ of size $n$, $H_{\max} = \log n$.
\end{lemma}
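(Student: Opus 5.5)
We need to show two things: every distribution $\mathbf{p}$ on $\mathcal{V}$ has $H(\mathbf{p}) \le \log n$, and the uniform distribution attains this value. Together they give $H_{\max} = \sup_{\mathbf{p}} H(\mathbf{p}) = \log n$. Throughout we adopt the convention $0\log 0 = 0$, which is justified by continuity since $x\log x \to 0$ as $x \to 0^+$. With this convention, only the tokens in the support $\mathcal{S} = \{y : \mathbf{p}(y) > 0\}$ contribute to $H$.

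\textbf{Attainment.} Take the uniform distribution $u(y) = 1/n$ for every $y \in \mathcal{V}$. A direct computation gives
\[
H(u) = -\sum_{y} \tfrac1n \log \tfrac1n = \log n.
\]

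\textbf{Upper bound.} We apply Jensen's inequality to the concave function $\log$. Write
\[
H(\mathbf{p}) = \sum_{y \in \mathcal{S}} \mathbf{p}(y)\,\log \frac{1}{\mathbf{p}(y)}.
\]
This is the expectation of $\log(1/\mathbf{p}(Y))$ under $Y \sim \mathbf{p}$, restricted to $\mathcal{S}$. Concavity of $\log$ then gives
\[
H(\mathbf{p}) \le \log \sum_{y\in\mathcal{S}} \mathbf{p}(y)\cdot\frac{1}{\mathbf{p}(y)} = \log |\mathcal{S}| \le \log n.
\]
Alternatively, one can compute the Kullback--Leibler divergence to the uniform distribution:
\[
D(\mathbf{p}\,\|\,u) = \sum_y \mathbf{p}(y)\log\big(n\,\mathbf{p}(y)\big) = \log n - H(\mathbf{p}).
\]
Gibbs' inequality $D \ge 0$ then yields the same bound. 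Gibbs' inequality itself follows from $\log x \le x - 1$.

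\textbf{Main obstacle.} The only delicate point is handling zero-probability tokens. The summand $\log(1/\mathbf{p}(y))$ is undefined when $\mathbf{p}(y) = 0$, so Jensen must be applied over $\mathcal{S}$ rather than over all of $\mathcal{V}$. This is why the argument passes through the intermediate bound $\log|\mathcal{S}|$ before using $|\mathcal{S}| \le n$.

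\textbf{Optional equality case.} Equality in Jensen's inequality holds exactly when $1/\mathbf{p}(y)$ is constant on $\mathcal{S}$. Equality in the second step requires $\mathcal{S} = \mathcal{V}$. Together these conditions force $\mathbf{p} = u$, so the uniform distribution is the unique maximizer. This matches the remark in the paper that $H_{\max}$ is achieved under a uniform distribution.
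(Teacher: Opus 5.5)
Your proof is correct, but it takes a different route from the paper.

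The paper invokes strict concavity of $H$ on the simplex, asserts that the maximum is therefore attained at the uniform distribution, and evaluates $H$ there. The step from concavity to the location of the maximizer is left implicit, and it needs permutation symmetry as well. Averaging $\mathbf{p}$ over all coordinate permutations yields $u$, and $H$ is permutation-invariant, so concavity gives $H(u) \ge H(\mathbf{p})$; strictness then gives uniqueness.

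You instead prove the inequality $H(\mathbf{p}) \le \log n$ directly. You do this either by Jensen applied to the concave function $\log$ over the support, or by Gibbs' inequality $D(\mathbf{p}\,\|\,u) \ge 0$. You then get uniqueness from the equality conditions rather than from strict concavity of $H$.

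The paper's version is shorter and conceptually tidy once the symmetry step is supplied. Yours is fully self-contained and handles zero-probability tokens explicitly. It also yields the sharper intermediate bound $H(\mathbf{p}) \le \log|\mathcal{S}|$. That bound is arguably the more useful statement in this paper's setting, since it directly bounds the entropy of the renormalized Top-b distribution by $\log|\mathcal{S}_{b,t}|$.
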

\begin{proof}
The Shannon entropy is strictly concave. It is maximized when the distribution is uniform, i.e., $p_i = \frac{1}{n} \forall i$.
\[
H(\mathbf{p}) = - \sum_{i=1}^n \frac{1}{n} \log \frac{1}{n} = - \left( \log \frac{1}{n} \right) \sum_{i=1}^n \frac{1}{n} = \log n.
\]
\end{proof}

\section{Token-Level Pruning Case Studies}
\label{app:case_studies}

To empirically demonstrate the theoretical behavior outlined in Section 4, we examine the next-token probability distributions and the resulting candidate sets across different decoding strategies. Table \ref{tab:case_studies} presents the original token probabilities and the renormalized probabilities for Top-$b$, Top-$p$, Top-$k$, and Min-$p$ across various semantic contexts.

The data reveals a consistent pattern: standard cumulative thresholding (Top-$p = 0.9$) routinely over-admits tokens in low-entropy regimes, whereas Top-$b$ strictly collapses the distribution to the most logical continuations.

\begin{itemize}
    \item \textbf{Arithmetic Reasoning (Low Entropy):} For the prompt \textit{"2+2="}, despite the model assigning 39.75\% probability to the correct token ("4"), Top-$p$ admits 14 total tokens into the candidate set, including "5" and "1", introducing unnecessary stochastic risk. Top-$b$ contracts its bandwidth and selects exactly 1 token ("4").
    \item \textbf{Syntactic Continuation (Low Entropy):} For the prompt \textit{"You will pay for what you have done, she hissed... The battle that ensued"}, the highly probable next token is "was" (46.00\%). Top-$p$ admits an excessive 44 tokens into the sampling pool. Top-$b$ again aggressively prunes the tail, selecting only 1 token.
    \item \textbf{Open-Ended Instruction (Moderate Entropy):} For prompts requiring structural formatting (e.g., \textit{"Describe a time..."}), the probability is distributed among valid structural tokens like newlines (\verb|\n|) and first-person pronouns ("I", "You"). Here, Top-$b$ correctly expands to include 2 to 3 valid tokens, whereas Top-$p$ includes 25 to 43 tokens, many of which are irrelevant distractors.
\end{itemize}
\begin{table}[H]
\centering
\caption{Comparison of selected support sizes ($|\mathcal{S}|$) and renormalized probabilities across different contexts. Top-$b$ consistently isolates the high-probability manifold, severely restricting the distractor tail compared to Top-$p$.}
\label{tab:case_studies}
\resizebox{\textwidth}{!}{%
\begin{tabular}{l l r r r r r}
\toprule
\textbf{Prompt Context} & \textbf{Top Tokens} & \textbf{Orig. Prob (\%)} & \textbf{Top-$b$} & \textbf{Top-$p$} & \textbf{Top-$k$} & \textbf{Min-$p$} \\
\midrule
\multicolumn{7}{l}{\textit{Prompt: "2+2="}} \\
\textbf{Support Size ($|\mathcal{S}|$)} & & & \textbf{1 token} & 14 tokens & 5 tokens & 9 tokens \\
& \texttt{\ 4} & 39.75 & \textbf{100.0} & 44.2 & 54.8 & 46.6 \\
& \texttt{\ 5} & 15.50 & - & 17.2 & 21.5 & 18.3 \\
& \texttt{\ \textbackslash n} & 7.81 & - & 8.7 & 10.8 & 9.2 \\
\midrule
\multicolumn{7}{l}{\textit{Prompt: "A rainbow is an optically brilliant meteorological event resulting from refraction, reflection, and dispersion of"}} \\
\textbf{Support Size ($|\mathcal{S}|$)} & & & \textbf{1 token} & 2 tokens & 5 tokens & 3 tokens \\
& \texttt{\ light} & 59.50 & \textbf{100.0} & 73.0 & 61.8 & 62.8 \\
& \texttt{\ sunlight} & 21.88 & - & 27.0 & 22.8 & 23.1 \\
& \texttt{\ the} & 13.25 & - & - & 14.0 & 14.0 \\
\midrule
\multicolumn{7}{l}{\textit{Prompt: "You will pay for what you have done, she hissed, her blade flashing in the moonlight. The battle that ensued"}} \\
\textbf{Support Size ($|\mathcal{S}|$)} & & & \textbf{1 token} & 44 tokens & 5 tokens & 4 tokens \\
& \texttt{\ was} & 46.00 & \textbf{100.0} & 51.2 & 72.0 & 74.8 \\
& \texttt{\ left} & 6.25 & - & 6.9 & 9.8 & 10.1 \\
& \texttt{\ between} & 5.53 & - & 6.2 & 8.6 & 9.0 \\
\midrule
\multicolumn{7}{l}{\textit{Prompt: "If you could help me write an email to my friends inviting them to dinner on Friday, it would be greatly appreciated."}} \\
\textbf{Support Size ($|\mathcal{S}|$)} & & & \textbf{3 tokens} & 43 tokens & 5 tokens & 14 tokens \\
& \texttt{\ \textbackslash n} & 23.88 & \textbf{52.9} & 26.5 & 41.5 & 31.0 \\
& \texttt{\ I} & 11.25 & \textbf{25.0} & 12.6 & 19.6 & 14.7 \\
& \texttt{\ \textbackslash n\textbackslash n} & 9.94 & \textbf{22.1} & 11.1 & 17.4 & 12.9 \\
\midrule
\multicolumn{7}{l}{\textit{Prompt: "Describe a time when you had to make a difficult decision."}} \\
\textbf{Support Size ($|\mathcal{S}|$)} & & & \textbf{2 tokens} & 25 tokens & 5 tokens & 6 tokens \\
& \texttt{\ \textbackslash n} & 34.50 & \textbf{56.2} & 38.5 & 45.2 & 44.2 \\
& \texttt{\ You} & 27.00 & \textbf{43.8} & 30.1 & 35.2 & 34.5 \\
& \texttt{\ \textbackslash n\textbackslash n} & 8.75 & - & 9.7 & 11.5 & 11.2 \\
\bottomrule
\end{tabular}%
}
\end{table}

\end{document}